\newcommand{\rg}[3]{{#1}{#2}\ldots{#2}{#3}}
\newcommand{\set}[1]{\{#1\}}
\newcommand{\enum}[2]{#1,\ldots,#2}
\newcommand{\eset}[2]{\set{\enum{#1}{#2}}}
\newcommand{\sel}[2]{\set{#1\mid #2}}
\newcommand{\pair}[2]{\langle{#1},{#2}\rangle}
\newcommand{\union}{\cup}
\newcommand{\dunion}{\sqcup}
\newcommand{\Union}{\bigcup}
\newcommand{\isect}{\cap}
\newcommand{\compl}[1]{\overline{#1}}
\newcommand{\limpl}{\rightarrow}
\newcommand{\lequiv}{\leftrightarrow}
\newcommand{\Lor}{\bigvee}
\newcommand{\Land}{\bigwedge}
\newcommand{\choice}[1]{\{#1\}}
\newcommand{\limit}[2]{{#1}\{{#2}\}}
\newcommand{\IF}{\leftarrow}
\DeclareMathSymbol{\naf}{\mathord}{symbols}{"18}
\newcommand{\END}{.}
\newcommand{\SEP}{;\ }
\newcommand{\hd}[1]{\mathrm{hd}(#1)}
\newcommand{\bd}[1]{\mathrm{bd}(#1)}
\newcommand{\bdplus}[1]{\mathrm{bd}^+(#1)}
\newcommand{\bdminus}[1]{\mathrm{bd}^-(#1)}
\newcommand{\bdatom}[1]{\mathrm{bd}_{#1}}
\newcommand{\width}[1]{\mathrm{w}(#1)}
\newcommand{\bin}[1]{\overline{#1}}
\newcommand{\bvadd}[3]{#1+_{#2}#3}
\newcommand{\bveq}[3]{#1=_{#2}#3}
\newcommand{\bvlt}[3]{#1<_{#2}#3}
\newcommand{\bvge}[3]{#1\geq_{#2}#3}
\newcommand{\val}{\tau}
\newcommand{\hb}[1]{\mathrm{At}(#1)}
\newcommand{\hu}[1]{\mathrm{FC}(#1)}
\newcommand{\rext}[1]{\mathrm{ext}_{#1}}
\newcommand{\rint}[1]{\mathrm{int}_{#1}}
\newcommand{\GLred}[2]{#1^{#2}}
\newcommand{\lm}[1]{\mathrm{LM}(#1)}
\newcommand{\modelset}[2]{\mathrm{#1}(#2)}
\newcommand{\tr}[2]{\mathrm{#1}(#2)}
\newcommand{\defof}[2]{\mathrm{Def}_{#1}(#2)}
\newcommand{\edefof}[2]{\mathrm{Ext}_{#1}(#2)}
\newcommand{\idefof}[2]{\mathrm{Int}_{#1}(#2)}
\newcommand{\DG}[1]{\mathrm{DG}^+(#1)}
\newcommand{\SCC}[1]{\mathrm{SCC}(#1)}
\newcommand{\comp}[1]{\mathrm{Comp}(#1)}
\newcommand{\ord}[1]{{#1}^{\mathrm{th}}}
\newcommand{\system}[1]{\textsc{#1}}
\newcommand{\eofex}{\mbox{}\nobreak\hfill\hspace{0.5em}$\blacksquare$}
\newcommand{\hl}[1]{\textbf{#1}}
\begin{document}

\title{Translating Answer-Set Programs into Bit-Vector Logic%
\thanks{This paper appears in the Proceedings of the 19th International
Conference on Applications of Declarative Programming and Knowledge
Management (INAP 2011).}}

\author{Mai Nguyen \and Tomi Janhunen \and Ilkka Niemel\"a}

\institute{%
Aalto University School of Science \\
Department of Information and Computer Science \\
\email{\{Mai.Nguyen,Tomi.Janhunen,Ilkka.Niemela\}@aalto.fi}}

\maketitle

\begin{abstract}
Answer set programming (ASP) is a paradigm for declarative problem
solving where problems are first formalized as rule sets, i.e.,
answer-set programs, in a uniform way and then solved by computing
answer sets for programs. The satisfiability modulo theories (SMT)
framework follows a similar modelling philosophy but the syntax is
based on extensions of propositional logic rather than rules.
Quite recently, a translation from answer-set programs into difference
logic was provided---enabling the use of particular SMT solvers for
the computation of answer sets.
In this paper, the translation is revised for another SMT fragment,
namely that based on fixed-width bit-vector theories. Thus, even
further SMT solvers can be harnessed for the task of computing answer
sets. The results of a preliminary experimental comparison are also
reported. They suggest a level of performance which is similar to
that achieved via difference logic.
\end{abstract}


\section{Introduction}


Answer set programming (ASP) is a rule-based approach to declarative
problem solving
\cite{DBLP:journals/ai/GelfondL02a,m-t-99,DBLP:journals/amai/Niemela99}.
The idea is to first formalize a given problem as a set of rules also
called an \emph{answer-set program} so that the answer sets of the
program correspond to the solution of the problem. Such problem
descriptions are typically devised in a \emph{uniform} way which
distinguishes general principles and constraints of the problem in
question from any instance-specific data. To this end, term variables
are deployed for the sake of compact representation of rules.
Solutions themselves can then be found out by \emph{grounding} the
rules of the answer-set program, and by computing answer sets
for the resulting ground program using an answer set solver. 
State-of-the-art answer set solvers are already very efficient
search engines
\cite{DBLP:conf/lpnmr/CalimeriIRABCCFFLMMPPRSSTV11,%
DBLP:conf/lpnmr/DeneckerVBGT09}
and have a wide range of industrial applications.


The satisfiability modulo theories (SMT) framework
\cite{DBLP:series/faia/BarrettSST09}
follows a similar modelling philosophy but the syntax is based on
extensions of propositional logic rather than rules with term
variables.  The SMT framework enriches traditional satisfiability
(SAT) checking \cite{DBLP:series/faia/2009-185} in terms of background
theories which are selected amongst a number of
alternatives.\footnote{\url{http://combination.cs.uiowa.edu/smtlib/}}
Parallel to propositional atoms, also \emph{theory atoms} involving
non-Boolean variables%
\footnote{%
However, variables in SMT are syntactically represented by (functional)
constants having a free interpretation over a specific domain such as
integers or reals.}
can be used as references to potentially infinite domains. Theory
atoms are typically used to express various constraints such as linear
constraints, difference constraints, etc., and they enable very
concise representations of certain problem domains for which plain
Boolean logic would be more verbose or insufficient
in the first place.


As regards the relationship of ASP and SMT, it was quite recently shown
\cite{DBLP:conf/lpnmr/JanhunenNS09,DBLP:journals/amai/Niemela08}
that answer-set programs can be efficiently translated into a simple
SMT fragment, namely \emph{difference logic} (DL)
\cite{DBLP:conf/cav/NieuwenhuisO05}.
This fragment is based on theory atoms of the form
$x-y\leq k$
formalizing an upper bound $k$ on the \emph{difference} of two
integer-domain variables $x$ and $y$. Although the required
transformation is linear, it is not reasonable to expect that such
theories are directly written by humans in order to express
the essentials of ASP in SMT.
The translations from
\cite{DBLP:conf/lpnmr/JanhunenNS09,DBLP:journals/amai/Niemela08}
and their implementation called \system{lp2diff}%
\footnote{\url{http://www.tcs.hut.fi/Software/lp2diff/}}
enable the use of particular SMT solvers for the computation of answer
sets. Our experimental results \cite{DBLP:conf/lpnmr/JanhunenNS09}
indicate that the performance obtained in this way is surprisingly
close to that of state-of-the-art answer set solvers. The results
of the third ASP competition
\cite{DBLP:conf/lpnmr/CalimeriIRABCCFFLMMPPRSSTV11},
however, suggest that the performance gap has grown since the previous
competition.
To address this trend, our current and future agendas include
a number of points:

\begin{itemize}
\item
We gradually increase the number of supported SMT fragments
which enables the use of further SMT solvers for the task of
computing answer sets.

\item
We continue the development of new translation
techniques from ASP to SMT.

\item
We submit ASP-based benchmark sets to future SMT competitions
(SMT-COMPs) to foster the efficiency of SMT solvers on problems
that are relevant for ASP.

\item
We develop new integrated languages that combine features of ASP and
SMT, and aim at implementations via translation into pure SMT as
initiated in \cite{JLN11:gttv}.
\end{itemize}


This paper contributes to the first item by devising a translation
from answer-set programs into theories of bit-vector logic. There is a
great interest to develop efficient solvers for this particular SMT
fragment due to its industrial relevance. In view of the
second item, we generalize an existing translation from
\cite{DBLP:conf/lpnmr/JanhunenNS09} to the case of bit-vector logic.
Using an implementation of the new translation, viz.~\system{lp2bv},
new benchmark classes can be created to support the third item on our
agenda. Finally, the translation also creates new potential for
language integration. In the long run, rule-based languages and, in
particular, the modern grounders exploited in ASP can provide valuable
machinery for the generation of SMT theories in analogy to answer-set
programs: The \emph{source code} of an SMT theory can be compacted
using rules and term variables \cite{JLN11:gttv} and specified in a
uniform way which is independent of any concrete problem
instances. Analogous approaches
\cite{DBLP:conf/lpnmr/Balduccini11,%
DBLP:conf/iclp/GebserOS09,DBLP:conf/flops/MellarkodG08}
combine ASP and constraint programming techniques without a translation.


The rest of this paper is organized as follows.
First, the basic definitions and concepts of answer-set programs and
fixed-width bit-vector logic are briefly reviewed in Section
\ref{section:preliminaries}.
The new translation from answer-set programs into bit-vector theories is
then devised in Section \ref{section:translation}.
The extended rule types of \system{smodels} compatible systems are
addressed in Section \ref{section:native-extensions}. Such extensions
can be covered either by native translations into bit-vector logic or
translations into normal programs.
As part of this research, we carried out a number of experiments using
benchmarks from the second ASP competition
\cite{DBLP:conf/lpnmr/DeneckerVBGT09}
and two state-of-the-art SMT solvers, viz.~\system{boolector} and
\system{z3}.  The results of the experiments are reported in Section
\ref{section:experiments}.  Finally, we conclude this paper in Section
\ref{section:conclusion} in terms of discussions of results and future work.


\section{Preliminaries}
\label{section:preliminaries}

The goal of this section is to briefly review the source and
target formalisms for the new translation devised in the
sequel. First, in Section \ref{section:normal-programs}, we recall
normal logic programs subject to answer set semantics and the main
notions exploited in their translation. A formal account of bit-vector
logic follows in Section \ref{section:bit-vector-logic}.

\subsection{Normal Logic Programs}
\label{section:normal-programs}

As usual, we define a \emph{normal logic program} $P$ 
as a finite set of \emph{rules} of the form
\begin{equation}
\label{eq:rule}
a \IF \enum{b_1}{b_n},\enum{\naf c_1}{\naf c_m}
\end{equation}
where $a$, $\enum{b_1}{b_n}$, and $\enum{c_1}{c_m}$ are propositional
atoms and $\naf$ denotes \emph{default negation}.  The \emph{head} of
a rule $r$ of the form (\ref{eq:rule}) is $\hd{r}=a$ whereas the part
after the symbol $\IF$ forms the \emph{body} of $r$, denoted by $\bd{r}$.
The body $\bd{r}$ consists of the positive part
$\bdplus{r} = \eset{b_1}{b_n}$
and the negative part
$\bdminus{r} = \eset{c_1}{c_m}$
so that
$\bd{r} = \bdplus{r} \union \sel{\naf c}{c \in \bdminus{r}}$.
Intuitively, a rule $r$ of the form (\ref{eq:rule}) appearing in a
program $P$ is used as follows: the head $\hd{r}$ can be inferred by
$r$ if the \emph{positive body atoms} in $\bdplus{r}$ are inferable by
the other rules of $P$, but not the \emph{negative body atoms} in
$\bdminus{r}$. The positive part of the rule, $r^+$ is defined as
$\hd{r}\IF\bdplus{r}$. A normal logic program is called
\emph{positive} if $r=r^+$ holds for every rule $r \in P$.

\paragraph{Semantics}
To define the semantics of a normal program $P$, we let $\hb{P}$ stand
for the set of atoms that appear in $P$.  An \emph{interpretation} of
$P$ is any subset $I \subseteq \hb{P}$ such that for an atom
$a\in\hb{P}$, $a$ is \textit{true} in $I$, denoted $I\models a$, iff
$a\in I$. For any negative literal $\naf c$, 
$I \models \naf c$ iff $I\not\models c$ iff $c\not\in I$.
A rule $r$ is satisfied in $I$, denoted $I \models r$, iff
$I\models\bd{r}$ implies $I \models \hd{r}$.  An interpretation $I$ is
a \emph{classical model} of $P$, denoted $I \models P$, iff, $I\models r$
holds for every $r \in P$. A model $M \models P$ is a \emph{minimal model}
of $P$ iff there is no $M' \models P$ such that $M'\subset M$. Each
positive normal program $P$ has a unique minimal model, i.e., the
\emph{least model} of $P$ denoted by $\modelset{LM}{P}$ in the sequel.
The least model semantics can be extended for an arbitrary normal
program $P$ by \emph{reducing} $P$ into a positive program
$\GLred{P}{M}=\sel{r^+}{r \in P\text{ and }M\isect\bdminus{r} = \emptyset}$
with respect to $M\subseteq\hb{P}$.
Then \emph{answer sets}, also known as \emph{stable models}
\cite{DBLP:conf/iclp/GelfondL88}, can be defined.

\begin{definition}[Gelfond and Lifschitz \cite{DBLP:conf/iclp/GelfondL88}]
\label{def:answer-set}
An interpretation $M\subseteq \hb{P}$ is an \emph{answer set} of a normal
program $P$ iff $M = \lm{\GLred{P}{M}}$.
\end{definition}

\begin{example}
\label{ex:answer-sets}
Consider a normal program $P$ \cite{DBLP:conf/lpnmr/JanhunenNS09}
consisting of the following six rules:
\[
\newcommand{\ws}{\hspace{3em}}
\begin{array}{l@{\ws}l@{\ws}l}
a\IF b, c \END & a \IF d \END  &  b\IF a, \naf d \END \\
b \IF a, \naf c \END & c \IF \naf d \END &  d \IF \naf c \END
\end{array}
\]
The answer sets of $P$ are $M_1=\set{a,b,d}$ and $M_2=\set{c}$.
To verify the latter, we note that
$\GLred{P}{M_2}=\set{%
a\IF b,c\SEP
b\IF a\SEP
c\IF\SEP
a\IF d}$
for which $\lm{\GLred{P}{M_2}}=\set{c}$. On the other hand, we have
$\GLred{P}{M_3}=\GLred{P}{M_2}$ for $M_3=\set{a,b,c}$ so
that $M_3\not\in\modelset{AS}{P}$.
\eofex
\end{example}

The number of answer sets possessed by a normal program $P$ can vary
in general. The set of answer sets of a normal program $P$ is denoted
by $\modelset{AS}{P}$. Next we present some concepts and results that
are relevant in order to capture answer sets in terms of propositional
logic and its extensions in the SMT framework.

\paragraph{Completion}
Given a normal program $P$ and an atom $a\in\hb{P}$, the \emph{definition}
of $a$ in $P$ is the set of rules
$\defof{P}{a}=\sel{r\in P}{\hd{r}=a}$.  The \emph{completion} of a
normal program $P$, denoted by $\comp{P}$, is a propositional theory
\cite{DBLP:conf/adbt/Clark77} which contains
\begin{equation}
\label{eq:completion}
a \lequiv
\Lor_{r\in\defof{P}{a}}
  \bigl(\Land_{b\in\bdplus{r}}b~~~\land \Land_{c\in\bdminus{r}}\neg c \bigr)
\end{equation}
for each atom $a \in \hb{P}$. Given a propositional theory $T$ and
its signature $\hb{T}$, the semantics of $T$ is determined by
$\modelset{CM}{T}=\sel{M\subseteq\hb{T}}{M \models T}$.
It is possible to relate $\modelset{CM}{\comp{P}}$ with
the models of a normal program $P$ by distinguishing
\emph{supported models} \cite{DBLP:books/mk/minker88/AptBW88}
for $P$. A model $M\models P$ is a supported model of $P$ iff for
every atom $a\in M$ there is a rule $r \in P$ such that $\hd{r} = a$
and $M \models \bd{r}$. In general, the set of supported models
$\modelset{SuppM}{P}$ of a normal program $P$ coincides with
$\modelset{CM}{\comp{P}}$. It can be shown
\cite{DBLP:journals/tcs/MarekS92} that stable models are also
supported models but not necessarily vice versa.  This means that in
order to capture $\modelset{AS}{P}$ using $\comp{P}$, the latter has to
be extended in terms of additional constraints as done, e.g., in
\cite{Janhunen06:jancl,DBLP:conf/lpnmr/JanhunenNS09}.

\begin{example}
\label{ex:completion}
For the program $P$ of Example \ref{ex:answer-sets}, the theory
$\comp{P}$ has formulas
$a\lequiv (b\land c)\lor d$,
$b\lequiv (a\land\neg d)\lor(a\land\neg c)$,
$c\lequiv\neg d$, and
$d\lequiv\neg c$.
The models of $\comp{P}$, i.e., its supported models, are
$M_1=\set{a,b,d}$,
$M_2=\set{c}$, and
$M_3=\set{a,b,c}$.
\eofex
\end{example}

\paragraph{Dependency Graphs}
The \emph{positive dependency graph} of a normal program $P$, denoted
by $\DG{P}$, is a pair $\pair{\hb{P}}{\leq}$ where $b \leq a$ holds iff
there is a rule $r \in P$ such that $\hd{r} = a$ and $b \in \bdplus{r}$.
Let $\leq^*$ denote the \emph{reflexive} and \emph{transitive} closure
of $\leq$.
A \emph{strongly connected component} (SCC) of $\DG{P}$ is a maximal
non-empty subset $S\subseteq\hb{P}$ such that $a \leq^* b$ and
$b \leq^* a$ hold for each $a,b \in S$.
The set of defining rules is generalized for an SCC $S$ by
$\defof{P}{S}=\Union_{a\in S}\defof{P}{a}$.
This set can be naturally partitioned into sets
$\edefof{P}{S}=\sel{r\in\defof{P}{S}}{\bdplus{r}\isect S=\emptyset}$
and
$\idefof{P}{S}=\sel{r\in\defof{P}{S}}{\bdplus{r}\isect S\neq\emptyset}$
of \emph{external} and \emph{internal} rules associated with $S$,
respectively. Thus, $\defof{P}{S}=\edefof{P}{S}\dunion\idefof{P}{S}$
holds in general.

\begin{example}
In the case of the program $P$ from Example \ref{ex:answer-sets}, the
SCCs of $\DG{P}$ are $S_1=\set{a,b}$, $S_2=\set{c}$, and $S_3=\set{d}$.
For $S_1$, we have
$\edefof{P}{S_1}=\set{a\IF d}$.
\eofex
\end{example}


\subsection{Bit-Vector Logic}
\label{section:bit-vector-logic}

\emph{Fixed-width bit-vector} theories have been introduced for
high-level reasoning about digital circuitry and computer programs in
the SMT framework
\cite{DBLP:journals/expert/BeckertHHSGRTBR06,SMT-LIB}.
Such theories are expressed in an extension of propositional logic
where atomic formulas speak about bit vectors in terms of a rich
variety of operators.

\paragraph{Syntax}
As usual in the context of SMT, variables are realized as 
constants that have a free interpretation over a particular domain (such as
integers or reals)\footnote{We use typically symbols $x,y,z$ to denote
 such free (functional) constants and symbols $a,b,c$ to denote
 propositional atoms.}.  
In the case of fixed-width bit-vector theories, this
means that each constant symbol $x$ represents a vector $x[1\ldots m]$
of bits of particular width $m$, denoted by $\width{x}$ in the sequel.
Such vectors enable a more compact representation of structures like
registers and often allow more efficient reasoning about them.
A special notation $\bin{n}$ is introduced to denote a bit vector that
equals to $n$, i.e., $\bin{n}$ provides a binary representation of
$n$. We assume that the actual width $m\geq\log_2(n+1)$ is determined
by the context where the notation $\bin{n}$ is used.
For the purposes of this paper, the most interesting arithmetic
operator for combining bit vectors is the addition of two $m$-bit
vectors, denoted by the parameterized function symbol $\bvadd{}{m}{}$
in an infix notation. The resulting vector is also $m$-bit which can
lead to an overflow if the sum exceeds $2^m-1$. Moreover, we use
Boolean operators $\bveq{}{m}{}$ and $\bvlt{}{m}{}$ with the usual
meanings for comparing the values of two $m$-bit vectors. Thus,
assuming that $x$ and $y$ are $m$-bit free constants, we may write
atomic formulas like $\bveq{x}{m}{y}$ and $\bvlt{x}{m}{y}$ in order
to compare the $m$-bit values of $x$ and $y$.
In addition to syntactic elements mentioned so far, we can use the
primitives of propositional logic to build more complex
\emph{well-formed formulas} of bit-vector logic. The syntax defined
for the SMT library contains further primitives which are skipped in
this paper. A theory $T$ in bit-vector logic is a set of well-formed
bit-vector formulas as illustrated by the following example.

\begin{example}
\label{ex:bit-vector-theory}
Consider a system of two processes, say A and B, and a theory
$T=\set{a\limpl(\bvlt{x}{2}{y}),\ b\limpl(\bvlt{y}{2}{x})}$
formalizing a scheduling policy for them.
The intuitive reading of $a$ (resp.~$b$) is that process A (resp.~B)
is scheduled with a higher priority and, thus, should start
earlier. The constants $x$ and $y$ denote the respective starting
times of A and B. Thus, e.g., $\bvlt{x}{2}{y}$ means that process A
starts before process B.
\eofex
\end{example}

\paragraph{Semantics}
Given a bit-vector theory $T$, we write $\hb{T}$ and $\hu{T}$ for the
sets of propositional atoms and free constants, respectively, appearing in
$T$. To determine the semantics of $T$, we define
\emph{interpretations} for $T$ as pairs $\pair{I}{\val}$ where
$I\subseteq\hb{T}$ is a standard propositional interpretation and
$\val$ is a partial function that maps a free constant $x\in\hu{T}$ and an
index $1\leq i\leq\width{x}$ to the set of bits $\set{0,1}$.
Given $\val$, a constant $x\in\hu{T}$ is mapped onto
$\val(x)=\sum_{i=1}^{\width{x}} (\val(x,i)\cdot 2^{\width{x}-i})$
and, in particular, $\val(\bin{n})=n$ for any $n$.
To cover any \emph{well-formed terms}%
\footnote{The constants and operators appearing in a well-formed
term $t$ are based on a fixed width $m$. Moreover, the width
$\width{x}$ of each constant $x\in\hu{T}$ must be the same
throughout $T$.}
$t_1$ and $t_2$ involving $\bvadd{}{m}{}$ and $m$-bit constants from
$\hu{T}$, we define
$\val(\bvadd{t_1}{m}{t_2})=\val(t_1)+\val(t_2)\mod 2^m$
and $\width{\bvadd{t_1}{m}{t_2}}=m$.
Hence, the value $\val(t)$ can be determined for any well-formed term
$t$ which enables the evaluation of more complex formulas as
formalized below.

\begin{definition}
\label{def:bit-vector-model}
Let $T$ be a bit-vector theory, $a\in\hb{T}$ a propositional atom,
$t_1$ and $t_2$ well-formed terms over $\hu{T}$ such that
$\width{t_1}=\width{t_2}$, and $\phi$ and $\psi$ well-formed
formulas. Given an interpretation $\pair{I}{\val}$
for the theory $T$, we define
\begin{enumerate}
\item
$\pair{I}{\val}\models a$ $\iff$ $a\in I$,

\item
$\pair{I}{\val}\models\bveq{t_1}{m}{t_2}$ $\iff$ $\val(t_1)=\val(t_2)$,

\item
$\pair{I}{\val}\models\bvlt{t_1}{m}{t_2}$ $\iff$ $\val(t_1)<\val(t_2)$,

\item
$\pair{I}{\val}\models\neg\phi$ $\iff$ $\pair{I}{\val}\not\models\phi$,

\item
$\pair{I}{\val}\models\phi\lor\psi$ $\iff$
$\pair{I}{\val}\models\phi$ or $\pair{I}{\val}\models\psi$,

\item
$\pair{I}{\val}\models\phi\limpl\psi$ $\iff$
$\pair{I}{\val}\not\models\phi$ or $\pair{I}{\val}\models\psi$, and

\item
$\pair{I}{\val}\models\phi\lequiv\psi$ $\iff$
$\pair{I}{\val}\models\phi$ if and only if $\pair{I}{\val}\models\psi$.
\end{enumerate}
The interpretation $\pair{I}{\val}$ is a model of $T$, i.e.,
$\pair{I}{\val} \models T$, iff $\pair{I}{\val}\models\phi$
for all $\phi\in T$.
\end{definition}

It is clear by Definition \ref{def:bit-vector-model} that pure
propositional theories $T$ are treated classically, i.e.,
$\pair{I}{\val}\models T$ iff $I\models T$ in the sense of
propositional logic.
As regards the theory $T$ from Example \ref{ex:bit-vector-theory}, we
have the sets of symbols $\hb{T}=\set{a,b}$ and $\hu{T}=\set{x,y}$.
Furthermore, we observe that there is no model of $T$ of the form
$\pair{\set{a,b}}{\val}$ because it is impossible to satisfy
$\bvlt{x}{2}{y}$ and $\bvlt{y}{2}{x}$ simultaneously using any partial
function $\val$.
On the other hand, there are $6$ models of the form
$\pair{\set{a}}{\val}$ because $\bvlt{x}{2}{y}$ can be satisfied in
$3+2+1=6$ ways by picking different values for the 2-bit vectors
$x$ and $y$.


\section{Translation}
\label{section:translation}

In this section, we present a translation of a logic program $P$ into a
bit-vector theory $\tr{BV}{P}$ that is similar to an existing
translation \cite{DBLP:conf/lpnmr/JanhunenNS09} into difference logic.
As its predecessor, the translation $\tr{BV}{P}$ consists of two
parts.  Clark's completion \cite{DBLP:conf/adbt/Clark77}, denoted by
$\tr{CC}{P}$, forms the first part of $\tr{BV}{P}$. The second part,
i.e., $\tr{R}{P}$, is based on
\emph{ranking constraints} from \cite{DBLP:journals/amai/Niemela08}
so that $\tr{BV}{P}=\tr{CC}{P}\union\tr{R}{P}$.
Intuitively, the idea is that the completion $\tr{CC}{P}$ captures
\emph{supported models} of $P$ \cite{DBLP:books/mk/minker88/AptBW88}
and the further formulas in $\tr{R}{P}$ exclude the non-stable ones so
that any classical model of $\tr{BV}{P}$ corresponds to a stable model
of $P$.

The completion $\modelset{CC}{P}$ is formed for each atom $a\in
\hb{P}$ on the basis of (\ref{eq:completion}):
\begin{enumerate}
\item
If $\defof{P}{a}=\emptyset$, the formula $\neg a$ is included
to capture the corresponding empty disjunction in (\ref{eq:completion}).

\item
If there is $r\in\defof{P}{a}$ such that $\bd{r}=\emptyset$, then
one of the disjuncts in (\ref{eq:completion}) is trivially true and
the formula $a$ can be used as such to capture the definition of $a$.

\item
If $\defof{P}{a}=\set{r}$ for a rule $r\in P$ with $n+m>0$, then
we simplify (\ref{eq:completion}) to a formula of the form
\begin{equation}
a\lequiv\Land_{b\in\bdplus{r}}b~~~\land \Land_{c\in\bdminus{r}}\neg c.
\end{equation}

\item
Otherwise, the set $\defof{P}{a}$ contains at least two rules
(\ref{eq:rule}) with $n+m>0$ and
\begin{equation}
\label{eq:completion-disjunction}
a \lequiv \Lor_{r\in\defof{P}{a}}\bdatom{r}
\end{equation}
is introduced using a new atom $\bdatom{r}$
for each $r\in\defof{P}{a}$ together with a formula
\begin{equation}
\label{eq:completion-definition}
\bdatom{r}
\lequiv
\Land_{b\in\bdplus{r}}b~~~\land\Land_{c\in\bdminus{r}}\neg c.
\end{equation}
\end{enumerate}
The rest of the translation exploits the SCCs of the positive
dependency graph of $P$ that was defined in Section
\ref{section:normal-programs}.
The motivation is to limit the scope of ranking constraints which
favors the length of the resulting translation. In particular,
singleton components $\modelset{SCC}{a}=\set{a}$ require no special
treatment if \emph{tautological} rules with $a\in\eset{b_1}{b_n}$ in
(\ref{eq:rule}) have been removed. Plain completion
(\ref{eq:completion}) is sufficient for atoms involved in such
components.
However, for each atom $a\in\hb{P}$ having a non-trivial component
$\SCC{a}$ in $\DG{P}$ such that $|\modelset{SCC}{a}|>1$, two new atoms
$\rext{a}$ and $\rint{a}$ are introduced to formalize the
\emph{external} and \emph{internal} support for $a$,
respectively. These atoms are defined in terms of equivalences
\begin{eqnarray}
\label{eq:external-support}
\rext{a}\lequiv\Lor_{r\in\edefof{P}{a}}\bdatom{r} \\
\label{eq:internal-support}
\rint{a}\lequiv
  \Lor_{r\in\idefof{P}{a}}\bigl[
    \bdatom{r}\land
       \Land_{b\in\bdplus{r}\isect\SCC{a}}(\bvlt{x_b}{m}{x_a})\bigr]
\end{eqnarray}
where $x_a$ and $x_b$ are bit vectors of width
$m=\lceil\log_2(|\modelset{SCC}{a}|+1)\rceil$
introduced for all atoms involved in $\SCC{a}$.  The formulas
(\ref{eq:external-support}) and (\ref{eq:internal-support}) are called
\emph{weak} ranking constraints and they are accompanied by
\begin{eqnarray}
\label{eq:support}
a\limpl\rext{a}\lor\rint{a}, \\
\label{eq:exclusion-for-suport}
\neg\rext{a}\lor\neg\rint{a}.
\end{eqnarray}
Moreover, when $\edefof{P}{a}\neq\emptyset$ and the atom $a$ happens
to gain external support from these rules, the value of $x_a$ is fixed
to $0$ by including the formula
\begin{equation}
\label{eq:clear}
\rext{a}\limpl(\bveq{x_a}{m}{\bin{0}}).
\end{equation}

\begin{example}
Recall the program $P$ from Example \ref{ex:answer-sets}. 
The completion $\tr{CC}{P}$ is:
\begin{center}
$
\begin{array}{rcl@{\quad}rcl@{\quad}rcl}
a & \lequiv & \bdatom{1} \lor \bdatom{2} \END&
\bdatom{1} & \lequiv & b \land c \END &
\bdatom{2} & \lequiv & d \END\\
b & \lequiv & \bdatom{3} \lor \bdatom{4} \END& 
\bdatom{3} & \lequiv & a \land \neg d \END &
\bdatom{4} & \lequiv & a \land \neg c \END \\
c & \lequiv & \neg d \END \\
d & \lequiv & \neg c \END
\end{array}
$
\end{center}
Since $P$ has only one non-trivial SCC, i.e., the component
$\SCC{a}=\SCC{b}=\set{a,b}$, the weak ranking constraints resulting in
$\modelset{R}{P}$ are
\begin{center}
$
\begin{array}{rcl@{\quad}rcl}
\rext{a} & \lequiv & \bdatom{2} \END &
\rint{a} & \lequiv & \bdatom{1}\land (\bvlt{x_b}{2}{x_a}) \END \\
\rext{b} & \lequiv & \perp \END \\
\rint{b} & \lequiv &
\multicolumn{4}{l}{
  [\bdatom{3}\land(\bvlt{x_a}{2}{x_b})]\lor
  [\bdatom{4}\land(\bvlt{x_a}{2}{x_b})]\END}
\end{array}
$
\end{center}
In addition to these, the formulas
\begin{center}
$
\begin{array}{rcl@{\quad}c@{\quad}rcl}
a & \limpl & \rext{a} \lor \rint{a}\END &
\neg \rext{a} \lor \neg \rint{a}\END &
\rext{a} & \limpl & (\bveq{x_a}{2}\bin{0})\END\\
b & \limpl & \rext{b} \lor \rint{b}\END &
\neg \rext{b} \lor \neg \rint{b}\END &&&
\end{array}
$
\end{center}
are also included in $\tr{R}{P}$.
\eofex
\end{example}

Weak ranking constraints are sufficient whenever the goal is to
compute only one answer set, or to check the existence of answer sets.
However, they do not guarantee a one-to-one correspondence between the
elements of $\modelset{AS}{P}$ and the set of models obtained for the
translation $\tr{BV}{P}$. To address this discrepancy, and to
potentially make the computation of all answer sets or counting the
number of answer sets more effective, \emph{strong} ranking
constraints can be imported from
\cite{DBLP:conf/lpnmr/JanhunenNS09} 
as well. Actually, there are two mutually compatible variants of
strong ranking constraints:
\begin{eqnarray}
\label{eq:strong-local}
\bdatom{r}\limpl
   \Lor_{b\in\bdplus{r}\isect\SCC{a}}\neg(\bvlt{\bvadd{x_b}{m}{\bin{1}}}{m}{x_a})
\\
\label{eq:strong-global}
\rint{a}\limpl
\Lor_{r\in\idefof{P}{a}}
  [\bdatom{r}\land
   \Lor_{b\in\bdplus{r}\isect\SCC{a}}
    (\bveq{x_a}{m}{\bvadd{x_b}{m}{\bin{1}}})].
\end{eqnarray}
The \emph{local} strong ranking constraint (\ref{eq:strong-local}) is
introduced for each $r\in\idefof{P}{a}$. It is worth pointing out that
the condition $\neg(\bvlt{\bvadd{x_b}{m}{\bin{1}}}{m}{x_a})$ is
equivalent to $\bvge{\bvadd{x_b}{m}{\bin{1}}}{m}{x_a}$.~\footnote{However, the form in (\ref{eq:strong-local}) is used in our
  implementation, since $\bvadd{}{m}{}$ and $\bvlt{}{m}{}$ are amongst
  the base operators of the \system{boolector} system.}
On the other hand, the \emph{global} variant (\ref{eq:strong-global})
covers the internal support of $a$ entirely.
Finally, in order to prune copies of models of the translation that
would correspond to the exactly same answer set of the original
program, a formula
\begin{equation}
\label{eq:clear-false}
\neg a\limpl(\bveq{x_a}{m}{\bin{0}})
\end{equation}
is included for every atom $a$ involved in a non-trivial SCC. We write
$\tr{R^{l}}{P}$ and $\tr{R^{g}}{P}$ for the respective extensions of
$\tr{R}{P}$ with local/global strong ranking constraints, and
$\tr{R^{lg}}{P}$ obtained using both. Similar conventions are applied
to $\tr{BV}{P}$ to distinguish four variants in total. The correctness
of these translations is addressed next.

\begin{theorem}
Let $P$ be a normal program and $\tr{BV}{P}$ its bit-vector translation.
\begin{enumerate}
\item
If $S$ is an answer set of $P$, then there is a model
$\pair{M}{\val}$ of $\tr{BV}{P}$ such that $S=M\isect\hb{P}$.

\item
If $\pair{M}{\val}$ is a model of $\tr{BV}{P}$, then
$S=M\isect\hb{P}$ is an answer set of $P$.
\end{enumerate}
\end{theorem}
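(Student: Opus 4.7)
The plan is to prove both directions by leveraging the decomposition $\tr{BV}{P}=\tr{CC}{P}\union\tr{R}{P}$. The completion part $\tr{CC}{P}$ is Clark's completion, so its propositional models correspond to the supported models of $P$; since every answer set is a supported model but not vice versa, the ranking constraints in $\tr{R}{P}$ must precisely exclude the non-stable supported models. Throughout, the induction is organized along a topological ordering of the SCCs of $\DG{P}$, and within a non-trivial SCC $C$ along the values $\val(x_a)$.

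For the first (soundness) direction, given an answer set $S$, I would take $M$ to contain the atoms of $S$ together with $\bdatom{r}$ for each rule whose body is satisfied by $S$, $\rext{a}$ whenever some $r\in\edefof{P}{a}$ has $\bdatom{r}\in M$, and $\rint{a}$ for every remaining $a\in S$ lying in a non-trivial component. For the bit-vector valuation, I would associate with each non-trivial SCC $C$ a rank function $\rho\colon S\isect C\to\mathbb{N}$, setting $\rho(a)=0$ when $a$ is externally supported and $\rho(a)=1+\min_r\max_{b\in\bdplus{r}\isect C}\rho(b)$ otherwise, where $r$ ranges over rules in $\idefof{P}{a}\isect\GLred{P}{S}$ whose body is satisfied by $S$. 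Because $S=\lm{\GLred{P}{S}}$ and $C$ is finite, this is well-defined with $\rho(a)<|C|$, so the width $m=\lceil\log_2(|C|+1)\rceil$ accommodates $\val(x_a)=\rho(a)$ (and $\val(x_a)=0$ for $a\notin S$). Every formula of $\tr{BV}{P}$, including the strong variants (\ref{eq:strong-local}) and (\ref{eq:strong-global}) and the pruning formula (\ref{eq:clear-false}), then holds by direct inspection of (\ref{eq:completion})--(\ref{eq:clear}).

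For the second (completeness) direction, let $\pair{M}{\val}\models\tr{BV}{P}$ and $S=M\isect\hb{P}$. The completion part $\tr{CC}{P}$ already yields that $S$ is a supported model of $P$, so $\lm{\GLred{P}{S}}\subseteq S$ is immediate from the fact that every derivation step in the reduct preserves membership in $S$. The reverse inclusion $S\subseteq\lm{\GLred{P}{S}}$ I would prove by outer induction on the topological order of SCCs and inner induction on $\val(x_a)$. For $a\in S$ in a trivial SCC, completion alone provides a rule in $\GLred{P}{S}$ whose positive body lies in strictly lower SCCs, hence in $\lm{\GLred{P}{S}}$ by outer IH. For $a\in S$ in a non-trivial SCC $C$, (\ref{eq:support}) forces either $\rext{a}$ or $\rint{a}$. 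Under $\rext{a}$, an external rule supplies the derivation via the outer IH. Under $\rint{a}$, (\ref{eq:internal-support}) singles out an internal rule $r$ with $\bdatom{r}\in M$ whose $C$-internal positive body atoms satisfy $\val(x_b)<\val(x_a)$ and are therefore in $\lm{\GLred{P}{S}}$ by the inner IH, while its external positive body atoms are handled by the outer IH; since $\bdatom{r}\in M$ also entails $\bdminus{r}\isect S=\emptyset$, we have $r^+\in\GLred{P}{S}$ and $a$ is derivable.

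The main obstacle is verifying that the rank function $\rho$ from the first direction is well-defined and strictly bounded by $|C|$, so that the chosen width is actually adequate. The subtlety is that an atom in $S\isect C$ is typically derivable in $\GLred{P}{S}$ through several competing rules, and one must argue that the stratification of $\lm{\GLred{P}{S}}$ restricted to $C$ never needs more than $|C|$ strata; this point is essentially inherited from the analogous argument in \cite{DBLP:conf/lpnmr/JanhunenNS09} for difference logic, with the only new observation being that the overflow semantics of $\bvadd{}{m}{}$ is harmless because all ranks stay below $2^m$.
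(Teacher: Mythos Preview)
Your proof is correct, but it takes a genuinely different route from the paper's. The paper does not re-derive the correspondence from scratch; instead it reduces both directions to the analogous theorem for the difference-logic translation $\tr{DL}{P}$ from \cite{DBLP:conf/lpnmr/JanhunenNS09}. For ($\implies$) it takes the DL model $\pair{M}{\val}$ guaranteed by that earlier result, partitions each non-trivial SCC into level sets according to $\val$, and re-indexes these levels by consecutive integers $0,1,\ldots,n$ to obtain a bit-vector valuation $\val'$; it then observes that the DL atoms $x_b\leq x_a-1$ and $(x_a\leq z+0)\land(z\leq x_a+0)$ correspond exactly to the BV atoms $\bvlt{x_b}{m}{x_a}$ and $\bveq{x_a}{m}{\bin{0}}$, so the model transfers by the structural similarity of $\tr{DL}{P}$ and $\tr{BV}{P}$. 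For ($\impliedby$) it simply reads each bit vector as an integer to get a DL valuation, checks the same atom-level correspondence, and invokes the DL theorem again.

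Your argument, by contrast, is self-contained: you build the rank function $\rho$ directly from the fixpoint structure of $\lm{\GLred{P}{S}}$ and verify each formula of $\tr{BV}{P}$ by hand; in the converse direction you run the double induction (outer on the topological order of SCCs, inner on $\val(x_a)$) to recover $S\subseteq\lm{\GLred{P}{S}}$. This is essentially the argument that underlies the DL result itself, specialised to bit vectors. What the paper's route buys is brevity and modularity---correctness is inherited wholesale from prior work, and only the syntactic correspondence between DL and BV atoms needs checking. What your route buys is independence from \cite{DBLP:conf/lpnmr/JanhunenNS09} and a construction that directly witnesses why the width $m=\lceil\log_2(|C|+1)\rceil$ suffices and why overflow in $\bvadd{}{m}{}$ is harmless (since $\rho(a)\leq|C|-1\leq 2^m-2$). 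One minor remark: the strong ranking constraints (\ref{eq:strong-local}), (\ref{eq:strong-global}) and the pruning formula (\ref{eq:clear-false}) are not part of $\tr{BV}{P}$ proper, so verifying them is not required for the theorem as stated, though doing so shows that your constructed model also works for the strengthened variants $\tr{BV^*}{P}$.
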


\begin{proof}
To establish the correspondence of answer sets and models as
formalized above, we appeal to the analogous property of the
translation of $P$ into difference logic (DL), denoted here by
$\tr{DL}{P}$. In DL, theory atoms $x\leq y+k$ constrain the
difference of two integer variables $x$ and $y$. Models can be
represented as pairs $\pair{I}{\val}$ where $I$ is a propositional
interpretation and $\val$ maps constants of theory atoms to integers
so that
$\pair{I}{\val}\models x\leq y+k$ $\iff$ $\val(x)\leq\val(y)+k$.
The rest is analogous to Definition \ref{def:bit-vector-model}.

($\implies$)
Suppose that $S$ is an answer set of $P$. Then the results of
\cite{DBLP:conf/lpnmr/JanhunenNS09} imply that there is a model
$\pair{M}{\val}$ of $\tr{DL}{P}$ such that $S=M\isect\hb{P}$.  The
valuation $\val$ is condensed for each non-trivial SCC
$S$ of $\DG{P}$ as follows. Let us partition $S$ into
$\rg{S_0}{\dunion}{S_n}$ such that
(i)
$\val(x_{a})=\val(x_{b})$
for each $0\leq i\leq n$ and $a,b\in S_i$,
(ii)
$\val(x_a)=\val(z)$%
\footnote{A special variable $z$ is used as a placeholder for
the constant $0$ in the translation $\tr{DL}{P}$
\cite{DBLP:conf/lpnmr/JanhunenNS09}.}
for each $a\in S_0$, and
(iii)
for each $0\leq i<j\leq n$, $a\in S_i$, and $b\in S_j$,
$\val(x_{a})\leq\val(x_{b})$.
Then define $\val'$ for the bit vector $x_a$ associated with an atom
$a\in S_i$ by setting $\val'(x_a,j)=1$ iff the $\ord{j}$ bit of
$\bin{i}$ is $1$, i.e., $\val'(x_a)=i$. It follows that
$\pair{I}{\val}\models x_b\leq x_a-1$
iff
$\pair{I}{\val'}\models\bvlt{x_b}{m}{x_a}$
for any $a,b\in S$. Moreover, we have
$\pair{M}{\val}\models (x_a\leq z+0)\land(z\leq x_a+0)$
iff
$\pair{M}{\val'}\models\bveq{x_a}{m}{\bin{0}}$
for any $a\in S$. Due to the similar structures of $\tr{DL}{P}$ and
$\tr{BV}{P}$, we obtain $\pair{M}{\val}\models\tr{BV}{P}$ as desired.

($\impliedby$)
Let $\pair{M}{\val}$ be a model of $\tr{BV}{P}$. Then define $\val'$
such that
$\val'(x)=\sum_{i=1}^{\width{x}}(\val(x,i)\cdot2^{\width{x}-i})$
where $x$ on the left hand side stands for the integer variable
corresponding to the bit vector $x$ on the right hand side. It follows
that
$\pair{I}{\val}\models\bvlt{x_b}{m}{x_a}$
iff
$\pair{I}{\val'}\models x_b\leq x_a-1$.
By setting $\val'(z)=0$, we obtain
$\pair{M}{\val}\models\bveq{x_a}{m}{\bin{0}}$
if and only if
$\pair{M}{\val'}\models (x_a\leq z+0)\land(z\leq x_a+0)$.
The strong analogy present in the structures of $\tr{BV}{P}$ and
$\tr{DL}{P}$ implies that $\pair{M}{\val'}$ is a model of
$\tr{DL}{P}$.  Thus, $S=M\isect\hb{P}$ is an answer set of $P$ by
\cite{DBLP:conf/lpnmr/JanhunenNS09}.
\qed
\end{proof}

Even tighter relationships of answer sets and models can be
established for the translations $\tr{BV^{l}}{P}$, $\tr{BV^{g}}{P}$,
and $\tr{BV^{lg}}{P}$. It can be shown that the model $\pair{M}{\val}$
of $\tr{BV^*}{P}$ corresponding to an answer set $S$ of $P$ is
unique, i.e., there is no other model $\pair{N}{\val'}$ of the
translation such that $S=N\isect\hb{P}$.
These results contrast with \cite{DBLP:conf/lpnmr/JanhunenNS09}:
the analogous extensions $\tr{DL^*}{P}$ guarantee the uniqueness
of $M$ in a model $\pair{M}{\val}$ but there are always infinitely
many copies $\pair{M}{\val'}$ of $\pair{M}{\val}$ such that
$\pair{M}{\val'}\models\tr{DL^*}{P}$.
Such a valuation $\val'$ can be simply obtained by setting
$\val'(x)=\val(x)+1$ for any $x$.


\section{Native Support for Extended Rule Types}
\label{section:native-extensions}

The input syntax of the \system{smodels} system was soon
extended by further rule types
\cite{DBLP:journals/ai/SimonsNS02}.
In solver interfaces, the rule types usually
take the following simple syntactic forms:
\begin{eqnarray}
\label{eq:choice-rule}
\choice{\rg{a_1}{,}{a_l}}\IF\rg{b_1}{,}{b_n},\naf\rg{c_1}{,}{\naf c_m}\END \\
\label{eq:cardinality-rule}
a\IF\limit{l}{\rg{b_1}{,}{b_n},\naf\rg{c_1}{,}{\naf c_m}}\END \\
\label{eq:weight-rule}
a\IF\limit{l}{\rg{b_1=w_{b_1}}{,}{b_n=w_{b_n}},
              \rg{\naf c_1=w_{c_1}}{,}{\naf c_m=w_{c_m}}}\END
\end{eqnarray}
The body of a \emph{choice rule} (\ref{eq:choice-rule}) is interpreted
in the same way as that of a normal rule~(\ref{eq:rule}). The head, in
contrast, allows to derive any subset of atoms $\rg{a_1}{,}{a_l}$, if
the body is satisfied, and to make a \emph{choice} in this way.
The head $a$ of a \emph{cardinality rule} (\ref{eq:cardinality-rule})
is derived, if its body is satisfied, i.e., the number of satisfied
literals amongst $\rg{b_1}{,}{b_n}$ and $\rg{\naf c_1}{,}{\naf c_m}$
is at least $l$ acting as the \emph{lower bound}.
A \emph{weight rule} of the form (\ref{eq:weight-rule}) generalizes
this idea by assigning arbitrary positive weights to literals (rather
than 1s). The body is satisfied if the sum of weights assigned to
satisfied literals is at least $l$, thus enabling one to infer the
head $a$ using the rule.
In practise, the grounding components used in ASP systems allow for more
versatile use of cardinality and weight rules, but the primitive forms
(\ref{eq:choice-rule}), (\ref{eq:cardinality-rule}), and
(\ref{eq:weight-rule}) provide a solid basis for efficient
implementation via translations. The reader is referred to
\cite{DBLP:journals/ai/SimonsNS02}
for a generalization of answer sets for programs involving
such extended rule types. The respective class of
\emph{weight constraint programs} (WCPs)
is typically supported by \system{smodels} compatible systems.

Whenever appropriate, it is possible to translate extended rule types
as introduced above back to normal rules. To this end, a number of
transformations are addressed in~\cite{JN11:mg65} and they have been
implemented as a tool called \system{lp2normal}%
\footnote{\label{fn:asptools}\url{http://www.tcs.hut.fi/Software/asptools/}}.
For instance, the head of a choice rule (\ref{eq:choice-rule}) can be
captured in terms of rules
\begin{center}
\begin{tabular}{lcccl}
$a_1\IF b,\naf \compl{a_1}\END$ && \ldots && $a_l\IF b,\naf \compl{a_l}\END$ \\
$\compl{a_1}\IF\naf a_1\END$ && \ldots && $\compl{a_l}\IF\naf a_l\END$ \\
\end{tabular}
\end{center}
where $\rg{\compl{a_1}}{,}{\compl{a_l}}$ are new atoms and $b$ is a
new atom standing for the body of (\ref{eq:choice-rule}) which can be
defined using (\ref{eq:choice-rule}) with the head replaced by $b$.
We assume that this transformation is applied at first to remove
choice rules when the goal is to translate extended rule types into
bit-vector logic. The strength of this transformation is locality,
i.e., it can be applied on a rule-by-rule basis, and linearity with
respect to the length of the original rule (\ref{eq:choice-rule}).  To
the contrary, linear normalization of cardinality and weight rules
seems impossible. Thus, we also provide direct translations into formulas
of bit-vector logic.

We present the translation of a weight rule (\ref{eq:weight-rule})
whereas the translation of a cardinality rule
(\ref{eq:cardinality-rule}) is obtained as a special case
$\rg{w_{b_1}}{=}{w_{b_n}}=\rg{w_{c_1}}{=}{w_{c_m}}=1$. The body of a
weight rule can be evaluated using bit vectors $\rg{s_1}{,}{s_{n+m}}$
of width
$k=\lceil\log_2(\sum_{i=1}^{n}w_{b_i}+\sum_{i=1}^{m}w_{c_i}+1)\rceil$
constrained by $2\times(n+m)$ formulas
\begin{center}
\begin{tabular}{lccl}
$b_1\limpl(\bveq{s_1}{k}{\bin{w_{b_1}}})$, &&&
$\neg b_1\limpl(\bveq{s_1}{k}{\bin{0}})$, \\
$b_2\limpl(\bveq{s_2}{k}{\bvadd{s_1}{k}{\bin{w_{b_2}}}})$, &&&
$\neg b_2\limpl(\bveq{s_2}{k}{s_1})$, \\
\vdots &&& \vdots \\
$b_n\limpl(\bveq{s_n}{k}{\bvadd{s_{n-1}}{k}{\bin{w_{b_n}}}})$, &&&
$\neg b_n\limpl(\bveq{s_n}{k}{s_{n-1}})$, \\
$c_1\limpl(\bveq{s_{n+1}}{k}{s_n})$, &&&
$\neg c_1\limpl(\bveq{s_{n+1}}{k}{\bvadd{s_n}{k}{\bin{w_{c_1}}}})$, \\
\vdots &&& \vdots \\
$c_m\limpl(\bveq{s_{n+m}}{k}{s_{n+m-1}})$, &&&
$\neg c_m\limpl(\bveq{s_{n+m}}{k}{\bvadd{s_{n+m-1}}{k}{\bin{w_{c_m}}}})$. \\
\end{tabular}
\end{center}
The lower bound $l$ of (\ref{eq:weight-rule}) can be checked in terms
of the formula $\neg(\bvlt{s_{n+m}}{k}{\bin{l}})$ where we assume that
$\bin{l}$ is of width $k$, since the rule can be safely deleted
otherwise. In view of the overall translation, the formula
$\bdatom{r}\lequiv\neg(\bvlt{s_{n+m}}{k}{\bin{l}})$ can be used in
conjunction with the completion formula
(\ref{eq:completion-disjunction}).
Weight rules also contribute to the dependency graph $\DG{P}$ in
analogy to normal rules, i.e., the head $a$ depends on all positive
body atoms $\rg{b_1}{,}{b_n}$. In this way, $\tr{BV}{P}$
generalizes for programs $P$ having extended rules.

\section{Experimental Results}
\label{section:experiments}

\begin{figure}[t]
\begin{center}
\begin{minipage}[t]{0.6\textwidth}
\begin{verbatim}
gringo program.lp instance.lp \
| smodels -internal -nolookahead \
| lpcat -s=symbols.txt \
| lp2bv [-l] [-g] \
| boolector -fm
\end{verbatim}
\end{minipage}
\end{center}
\caption{Unix shell pipeline for running a benchmark instance
\label{fig:script}}
\end{figure}

A new translator called \system{lp2bv} was implemented as a derivative of
\system{lp2diff}\footnote{\url{http://www.tcs.hut.fi/Software/lp2diff/}}
that translates logic programs into difference logic. In contrast,
the new translator will provide its output in the bit-vector format. 
In analogy to its predecessor, it expects to receive its input in the
\system{smodels}\footnote{\url{http://www.tcs.hut.fi/Software/smodels/}} 
file format. Models of the resulting bit-vector theory are searched for using
\system{boolector}\footnote{\url{http://fmv.jku.at/boolector/}}
(v.~1.4.1) 
\cite{DBLP:conf/tacas/BrummayerB09} and
\system{z3}%
\footnote{\url{http://research.microsoft.com/en-us/um/redmond/projects/z3/}} 
(v.~2.11) \cite{DBLP:conf/tacas/MouraB08} as back-end solvers. 
The goal of our preliminary experiments was to see how the
performances of systems based on \system{lp2bv} compare with the
performance of a state-of-the-art ASP solver
\system{clasp}\footnote{\url{http://www.cs.uni-potsdam.de/clasp/}}
(v.~1.3.5) \cite{DBLP:conf/lpnmr/GebserKNS07a}.
The experiments were based on the NP-complete benchmarks of the ASP
Competition 2009. In this benchmark collection, there are 23 benchmark
problems with 516 instances in total.
Before invoking a translator and the respective SMT solver, we
performed a few preprocessing steps, as detailed in Figure
\ref{fig:script}, by calling:

\begin{itemize}
\item
\system{gringo} (v.~2.0.5), for grounding 
the problem encoding and a given instance;

\item
\system{smodels}\footnote{\url{http://www.tcs.hut.fi/Software/smodels/}} 
(v.~2.34), for simplifying the resulting ground program;

\item
\system{lpcat} (v.~1.18), for removing all unused atom numbers,
for making the atom table of the ground program contiguous, and
for extracting the symbols for later use; and

\item
\system{lp2normal} (version 1.11), for normalizing the program.
\end{itemize}
The last step is optional and not included as part of the pipeline
in Figure \ref{fig:script}. Pipelines of this kind were executed
under Linux/Ubuntu operating system running on six-core
AMD Opteron$^{\mathrm{(TM)}}$ 2435 processors under 2.6 GHz clock
rate and with 2.7 GB memory limit that corresponds to the amount
of memory available in the ASP Competition 2009.

For each system based on a translator and a back-end solver, there are
four variants of the system to consider: W indicates that only weak
ranking constraints are used, while L, G, and LG mean that either
local, or global, or both local and global strong ranking constraints,
respectively, are employed when translating the logic program.

\begin{table}[t]
\begin{center}
\caption{Experimental results without normalization}
\label{expRes}
{\fontsize{6}{9}\selectfont
\centering
\begin{tabular}{|p{3.1cm}|c||c||c|c|c|c|c|c|c|c|c|c|c|c|}
\hline
& INST & \textsc{CLASP} & \multicolumn{4}{c|}{\textsc{LP2BV+BOOLECTOR}} & \multicolumn{4}{c|}{\textsc{LP2BV+Z3}} & \multicolumn{4}{c|}{\textsc{LP2DIFF+Z3}} \\
Benchmark &&  & W & L & G & LG & W & L & G & LG & W & L & G & LG \\
\hline \hline
        Overall Performance  	& 516  &  465  &  276  &  244  &  261  &  256  &  217  &  216  &  194  &  204  &\hl{360}&  349  &  324  &  324 \\
								&	   &347/118&188/ 88&161/ 83&174/ 87&176/ 80&142/ 75&147/ 69&124/ 70&135/ 69&\hl{257/103}&251/ 98&225/ 99&226/ 98 \\
\hline\hline
                KnightTour      &  10  &  8/ 0 &  2/ 0 &  1/ 0 &  0/ 0 &  0/ 0 &  1/ 0 &  0/ 0 &  0/ 0 &  1/ 0 &\hl{6/ 0}&\hl{6/ 0}&  4/ 0 &  5/ 0 \\
            GraphColouring      &  29  &  8/ 0 &  \hl{7/0} &  \hl{7/0} &  \hl{7/0} &  \hl{7/0} &  6/ 0 &  \hl{7/0} &  \hl{7/0} &  \hl{7/0} &  \hl{7/0} &  \hl{7/0} &  \hl{7/0} &  \hl{7/0} \\
               WireRouting      &  23  & 11/11 &  2/ 3 &  1/ 1 &  1/ 2 &  0/ 2 &  1/ 3 &  0/ 0 &  0/ 0 &  0/ 1 &  3/ 3 &  2/ 3 &  2/ 4 &  \hl{5/3} \\
     DisjunctiveScheduling      &  10  &  5/ 0 &  0/ 0 &  0/ 0 &  0/ 0 &  0/ 0 &  0/ 0 &  0/ 0 &  0/ 0 &  0/ 0 &  0/ 0 &  0/ 0 &  0/ 0 &  0/ 0 \\
         GraphPartitioning      &  13  &  6/ 7 &  3/ 0 &  3/ 0 &  3/ 0 &  3/ 0 &  4/ 0 &  4/ 0 &  4/ 0 &  3/ 0 &  \hl{6/2} &  6/ 1 &  6/ 1 &  6/ 1 \\
            ChannelRouting      &  11  &  6/ 2 &  \hl{6/2} &  \hl{6/2} &  \hl{6/2} &  \hl{6/2} &  5/ 2 &  \hl{6/2} &  \hl{6/2} &  \hl{6/2} &  \hl{6/2} &  \hl{6/2} &  \hl{6/2} &  \hl{6/2} \\
                 Solitaire      &  27  & 19/ 0 &  2/ 0 &  5/ 0 &  1/ 0 &  4/ 0 &  0/ 0 &  0/ 0 &  0/ 0 &  0/ 0 & \hl{21/0} & \hl{21/0} & 20/ 0 & \hl{21/0} \\
                 Labyrinth      &  29  & 26/ 0 &  \hl{1/0} &  0/ 0 &  0/ 0 &  0/ 0 &  0/ 0 &  0/ 0 &  0/ 0 &  0/ 0 &  0/ 0 &  0/ 0 &  0/ 0 &  0/ 0 \\
WeightBoundedDominatingSet      &  29  & 26/ 0 & 18/ 0 & 18/ 0 & 17/ 0 & 18/ 0 & 12/ 0 & 12/ 0 & 11/ 0 & 12/ 0 & \hl{22/0} & \hl{22/0} & \hl{22/0} & 21/ 0 \\
            MazeGeneration      &  29  & 10/15 &  8/15 &  1/15 &  0/15 &  0/16 &  5/16 &  1/15 &  0/15 &  1/15 & \hl{\hl{10/17}} & 10/15 &  5/15 &  4/15 \\
                  15Puzzle      &  16  & 16/ 0 & \hl{16/0} & 15/ 0 & 14/ 0 & 15/ 0 &  4/ 0 &  4/ 0 &  5/ 0 &  5/ 0 &  0/ 0 &  0/ 0 &  0/ 0 &  0/ 0 \\
            BlockedNQueens      &  29  & 15/14 &  2/ 2 &  0/ 2 &  1/ 2 &  0/ 2 &  1/ 0 &  2/ 0 &  2/ 0 &  0/ 0 & \hl{15/13} & \hl{15/13} & 15/12 & \hl{15/13} \\
    ConnectedDominatingSet      &  21  & 10/10 & \hl{10/11} &  9/ 8 & \hl{10/11} &  6/ 3 & 10/10 &  9/10 & 10/ 9 & 10/ 9 &  9/ 8 &  7/ 6 &  9/ 7 &  7/ 6 \\
              EdgeMatching      &  29  & 29/ 0 &  0/ 0 &  0/ 0 &  0/ 0 &  0/ 0 &  0/ 0 &  0/ 0 &  0/ 0 &  0/ 0 &  \hl{3/0} &  1/ 0 &  \hl{3/0} &  2/ 0 \\
                  Fastfood      &  29  & 10/19 &  9/16 & 10/16 & 10/16 &  9/16 &  9/ 9 &  9/ 9 &  9/10 &  9/ 9 & \hl{10/18} & \hl{10/18} & \hl{10/18} & \hl{10/18} \\
    GeneralizedSlitherlink      &  29  & 29/ 0 & \hl{29/0} & 20/ 0 & \hl{29/0} & \hl{29/0} & \hl{29/0} & \hl{29/0} & 16/ 0 & \hl{29/0} & \hl{29/0} & \hl{29/0} & \hl{29/0} & \hl{29/0} \\
           HamiltonianPath      &  29  & 29/ 0 & 27/ 0 & 25/ 0 & \hl{29/0} & 28/ 0 & 26/ 0 & 27/ 0 & 25/ 0 & 26/ 0 & \hl{29/0} & \hl{29/0} & \hl{29/0} & \hl{29/0} \\
                     Hanoi      &  15  & 15/ 0 & \hl{15/0} & \hl{15/0} & \hl{15/0} & \hl{15/0} &  5/ 0 &  5/ 0 &  5/ 0 &  4/ 0 & \hl{15/0} & \hl{15/0} & \hl{15/0} & \hl{15/0} \\
    HierarchicalClustering      &  12  &  8/ 4 &  \hl{8/4} &  \hl{8/4} &  \hl{8/4} &  \hl{8/4} &  4/ 4 &  4/ 4 &  4/ 4 &  4/ 4 &  \hl{8/4} &  \hl{8/4} &  \hl{8/4} &  \hl{8/4} \\
              SchurNumbers      &  29  & 13/16 &  6/16 &  5/16 &  5/16 &  5/16 &  9/16 &  9/16 &  9/16 &  9/16 & \hl{11/16} & \hl{11/16} & \hl{11/16} & \hl{11/16} \\
                   Sokoban      &  29  &  9/20 &  9/19 &  8/19 &  8/19 &  8/19 &  7/15 &  7/13 &  7/14 &  5/13 &  \hl{9/20} &  \hl{9/20} &  \hl{9/20} &  \hl{9/20} \\
                    Sudoku      &  10  & 10/ 0 &  5/ 0 &  4/ 0 &  4/ 0 &  5/ 0 &  4/ 0 &  4/ 0 &  4/ 0 &  4/ 0 &  \hl{9/0} &  8/ 0 &  8/ 0 &  \hl{9/0} \\
     TravellingSalesperson      &  29  & 29/ 0 &  3/ 0 &  0/ 0 &  6/ 0 & 10/ 0 &  0/ 0 &  8/ 0 &  0/ 0 &  0/ 0 & \hl{29/0} & \hl{29/0} &  7/ 0 &  7/ 0 \\

\hline
\end{tabular}
}
\end{center}
\end{table}

Table \ref{expRes} collects the results from our experiments without
normalization whereas Table \ref{expResNor} shows the results when
\system{lp2normal} \cite{JN11:mg65} was used to remove extended rule
types discussed in Section \ref{section:native-extensions}.  In both
tables, the first column gives the name of the benchmark, followed by the
number of instances of that particular benchmark in the second column.
The following columns indicate the numbers of instances that were
solved by the systems considered in our experiments.  A notation like
8/4 means that the system was able to solve eight satisfiable and four
unsatisfiable instances in that particular benchmark.
Hence, if there are 15 instances in a benchmark and the system could
only solve 8/4, this means that the system was unable to solve the
remaining three instances within the time limit of 600 seconds,
i.e. ten minutes, per instance%
\footnote{One observation is that the performance of systems based on
  \system{lp2bv} is quite stable: even when we extended the
  time limit to 20 minutes, the results did not change much
  (differences of only one or two instances were perceived in most cases).}.
As regards the number of solved instances in each benchmark, the
best performing translation-based approaches are highlighted in boldface.
Though it was not shown in all tables, we also run the experiments
using translator \system{lp2diff} with \system{z3} as back-end solver,
and the summary is included in Table \ref{expSum}---giving an overview
of experimental results in terms of total numbers of instances
solved out of 516.

It is apparent that the systems based on \system{lp2bv} did not
perform very well without normalization. As indicated by Table
\ref{expSum}, the overall performance was even worse than that of
systems using \system{lp2diff} for translation and \system{z3} for
model search. However, if the input was first translated into a
normal logic program using \system{lp2normal}, i.e., before
translation into a bit-vector theory, the performance was clearly
better. Actually, it exceeded that of the systems based on
\system{lp2diff} and became closer to that of \system{clasp}.
We note that normalization does not help so much in case of
\system{lp2diff} and the experimental results obtained using both
normalized and unnormalized instances are quite similar in terms of
solved instances. Thus it seems that solvers for bit-vector logic are
not able to make the best of native translations of cardinality and
weight rules from
Section~\ref{section:native-extensions}
in full. If an analogous translation into difference logic is used, as
implemented in \system{lp2diff}, such a negative effect was not
perceived using \system{z3}. Our understanding is that the efficient
graph-theoretic satisfiability check for difference constraints used
in the search procedure of \system{z3} turns the native translation
feasible as well.
As indicated by our test results, \system{boolector} is clearly better
back-end solver for \system{lp2bv} than \system{z3}. This was to be
expected since \system{boolector} is a native solver for bit-vector
logic whereas \system{z3} supports a wider variety of SMT fragments
and can be used for more general purposes. Moreover, the design of
\system{lp2bv} takes into account operators of bit-vector logic which
are directly supported by \system{boolector} and not implemented as
syntactic sugar.

\begin{table}[t]
\begin{center}
\caption{Experimental results with normalization}
\label{expResNor}
{\fontsize{7}{9}\selectfont
\centering
\begin{tabular}{|p{3.1cm}|c||c||c|c|c|c|c|c|c|c|}
\hline
& INST & \textsc{CLASP} & \multicolumn{4}{c|}{\textsc{LP2BV+BOOLECTOR}} & \multicolumn{4}{c|}{\textsc{LP2BV+Z3}}\\
Benchmark & & & W & L & G & LG & W & L & G & LG\\
\hline \hline
        Overall Performance  & 516  &  459&\hl{381}&  343&  379&\hl{381}&  346&  330&  325&  331 \\
                            &	   &346/113&\hl{279/102}&243/100&278/101&\hl{281/100}&240/106&231/ 99&224/101&232/ 99 \\
\hline\hline
                KnightTour  &  10  & 10/ 0 &  \hl{2/0} &  \hl{2/0} &  1/ 0 &  0/ 0 &  1/ 0 &  0/ 0 &  0/ 0 &  0/ 0 \\
            GraphColouring  &  29  &  9/ 0 &  8/ 0 &  8/ 0 &  8/ 0 &  8/ 0 &  \hl{9/2} &  \hl{9/2} &  \hl{9/2} &  \hl{9/2} \\
               WireRouting  &  23  & 11/11 &  2/ 6 &  1/ 3 &  1/ 3 &  1/ 3 &  \hl{2/7} &  1/ 4 &  1/ 4 &  1/ 3 \\
     DisjunctiveScheduling  &  10  &  5/ 0 &  \hl{5/0} &  \hl{5/0} &  \hl{5/0} &  \hl{5/0} &  \hl{5/0} &  \hl{5/0} &  \hl{5/0} &  \hl{5/0} \\
         GraphPartitioning  &  13  &  4/ 1 &  \hl{5/0} &  \hl{5/0} &  4/ 0 &  \hl{5/0} &  2/ 1 &  2/ 1 &  2/ 1 &  2/ 0 \\
            ChannelRouting  &  11  &  6/ 2 &  \hl{6/2} &  \hl{6/2} &  \hl{6/2} &  \hl{6/2} &  \hl{6/2} &  \hl{6/2} &  \hl{6/2} &  \hl{6/2} \\
                 Solitaire  &  27  & 18/ 0 & \hl{23/0} & \hl{23/0} & \hl{23/0} & \hl{23/0} & 22/ 0 & 22/ 0 & 22/ 0 & 22/ 0 \\
                 Labyrinth  &  29  & 27/ 0 &  1/ 0 &  1/ 0 &  2/ 0 &  \hl{3/0} &  0/ 0 &  0/ 0 &  0/ 0 &  0/ 0 \\
WeightBoundedDominatingSet  &  29  & 25/ 0 & 15/ 0 & 15/ 0 & 15/ 0 & \hl{16/0} & 10/ 0 & 10/ 0 & 10/ 0 & 10/ 0 \\
            MazeGeneration  &  29  & 10/15 &  \hl{8/15} &  0/15 &  0/15 &  0/16 &  5/16 &  0/15 &  0/15 &  0/15 \\
                  15Puzzle  &  16  & 15/ 0 & \hl{16/0} & \hl{16/0} & \hl{16/0} & \hl{16/0} & 11/ 0 & 10/ 0 & 11/ 0 & 11/ 0 \\
            BlockedNQueens  &  29  & 15/14 & 14/14 & 14/14 & 14/14 & 14/14 & \hl{15/14} & \hl{15/14} & \hl{15/14} & \hl{15/14} \\
    ConnectedDominatingSet  &  21  & 10/11 & \hl{10/11} &  8/11 &  9/11 &  9/10 & \hl{10/11} &  9/11 &  9/11 &  9/11 \\
              EdgeMatching  &  29  & 29/ 0 & 29/ 0 & 29/ 0 & 29/ 0 & 29/ 0 & 29/ 0 & 29/ 0 & 29/ 0 & 29/ 0 \\
                  Fastfood  &  29  & 10/19 &  9/14 &  9/15 &  \hl{9/16} &  9/15 &  0/13 &  0/10 &  0/12 &  0/12 \\
    GeneralizedSlitherlink  &  29  & 29/ 0 & \hl{29/ 0} & 21/ 0 & \hl{29/ 0} & \hl{29/ 0} & \hl{29/ 0} & \hl{29/ 0} & 21/ 0 & \hl{29/ 0} \\
           HamiltonianPath  &  29  & 29/ 0 & \hl{29/ 0} & 28/ 0 & \hl{29/ 0} & \hl{29/ 0} & \hl{29/ 0} & \hl{29/ 0} & \hl{29/ 0} & \hl{29/ 0} \\
                     Hanoi  &  15  & 15/ 0 & \hl{15/ 0} & \hl{15/ 0} & \hl{15/ 0} & \hl{15/ 0} & \hl{15/ 0} & \hl{15/ 0} & \hl{15/ 0} & \hl{15/ 0} \\
    HierarchicalClustering  &  12  &  8/ 4 &  \hl{8/4} &  \hl{8/4} &  \hl{8/4} &  \hl{8/4} &  \hl{8/4} &  \hl{8/4} &  \hl{8/4} &  \hl{8/4} \\
              SchurNumbers  &  29  & 13/16 & 10/16 & 10/16 &  9/16 & 10/16 & \hl{13/16} & \hl{13/16} & \hl{13/16} & \hl{13/16} \\
                   Sokoban  &  29  &  9/20 &  \hl{9/20} &  \hl{9/20} &  \hl{9/20} &  \hl{9/20} &  \hl{9/20} &  \hl{9/20} &  \hl{9/20} &  \hl{9/20} \\
                    Sudoku  &  10  & 10/ 0 & \hl{10/0} & \hl{10/0} & \hl{10/0} & \hl{10/0} & \hl{10/0} & \hl{10/0} & \hl{10/0} & \hl{10/0} \\
     TravellingSalesperson  &  29  & 29/ 0 & 16/ 0 &  0/ 0 & \hl{27/0} & \hl{27/0} &  0/ 0 &  0/ 0 &  0/ 0 &  0/ 0 \\
\hline
\end{tabular}}
\end{center}
\end{table}

In addition, we note on the basis of our results that the performance of
the state-of-the-art ASP solver \system{clasp} is significantly better,
and the translation-based approaches to computing stable models are
still left behind. By the results of Table \ref{expResNor}, even the best
variants of systems based on \system{lp2bv} did not work well enough to
compete with \system{clasp}. The difference is especially due to the
following benchmarks:
\emph{Knight Tour},
\emph{Wire Routing},
\emph{Graph Partitioning},
\emph{Labyrinth},
\emph{Weight Bounded Dominating Set},
\emph{Fastfood}, and
\emph{Travelling Salesperson}.
All of them involve either recursive rules
(\emph{Knight Tour}, \emph{Wire Routing}, and \emph{Labyrinth}),
weight rules (\emph{Weight Bounded Dominating Set} and \emph{Fastfood}),
or both (\emph{Graph Partitioning} and \emph{Travelling Salesperson}).
Hence, it seems that handling recursive rules and weight constraints
in the translational approach is less efficient compared to their
native implementation in \system{clasp}.
When using the current normalization techniques to remove cardinality
and weight rules, the sizes of ground programs tend to increase
significantly and, in particular, if weight rules are abundant.  For
example, after normalization the ground programs are ten times larger
for the benchmark \emph{Weight Bounded Dominating Set}, and five times
larger for \emph{Fastfood}.
It is also worth pointing out that the efficiency of \system{clasp}
turned out to be insensitive to normalization.

While having trouble with recursive rules and weight constraints for
particular benchmarks, the translational approach handles certain large
instances quite well. The largest instances in the experiments belong
to the \emph{Disjunctive Scheduling} benchmark, of which all instances
are ground programs of size over one megabyte but after normalization%
\footnote{In this benchmark, normalization does not affect
the size of grounded programs significantly.},
the \system{lp2bv} systems can solve as many instances as \system{clasp}.

\begin{table}[t]
\begin{center}
\caption{Summary of the experimental results}
\label{expSum}
\centering
\begin{tabular}{|ll|ccc|ccc|ccc|ccc|}
\hline
		System			 		&&&  W  &&&  L  &&&  G  &&& LG & \\
\hline \hline
\textsc{LP2BV+BOOLECTOR} 		&\quad\quad&\quad& 276 &\quad&\quad& 244 &\quad&\quad& 261 &\quad&\quad& 256 &\quad\\
\textsc{LP2BV+Z3}		 		&\quad\quad&\quad& 217 &\quad&\quad& 216 &\quad&\quad& 194 &\quad&\quad& 204 &\quad\\
\textsc{LP2DIFF+Z3}		 		&\quad\quad&\quad& 360 &\quad&\quad& 349 &\quad&\quad& 324 &\quad&\quad& 324 &\quad\\
\hline \hline
\textsc{CLASP}					&\quad& \multicolumn{12}{c|}{465} \\
\hline \hline
\textsc{LP2NORMAL2BV+BOOLECTOR} &\quad\quad&\quad& 381 &\quad&\quad& 343 &\quad&\quad& 379 &\quad&\quad& 381 &\quad\\
\textsc{LP2NORMAL2BV+Z3} 		&\quad\quad&\quad& 346 &\quad&\quad& 330 &\quad&\quad& 325 &\quad&\quad& 331 &\quad\\
\textsc{LP2NORMAL2DIFF+Z3} 		&\quad\quad&\quad& 364 &\quad&\quad& 357 &\quad&\quad& 349 &\quad&\quad& 349 &\quad\\
\hline \hline
\textsc{LP2NORMAL+CLASP}					&\quad& \multicolumn{12}{c|}{459} \\
\hline
\end{tabular}
\end{center}
\end{table}


\section{Conclusion}
\label{section:conclusion}

In this paper, we present a novel and concise translation from normal
logic programs into fixed-width bit-vector theories. Moreover, the
extended rule types supported by \system{smodels} compatible answer
set solvers can be covered via native translations. The length of the
resulting translation is linear with respect to the length of the
original program. The translation has been implemented as a
translator, \system{lp2bv}, which enables the use of bit-vector
solvers in the search for answer sets. Our preliminary experimental
results indicate a level of performance which is similar to that
obtained using solvers for difference logic. However, this presumes
one first to translate extended rule types into normal rules and then to
apply the translation into bit-vector logic. One potential
explanation for such behavior is the way in which SMT solvers
implement reasoning with bit vectors: a predominant strategy is to
translate theory atoms involving bit vectors into propositional
formulas and to apply satisfiability checking techniques
systematically.  We anticipate that an improved performance could be
obtained if a native support for certain bit vector primitives were
incorporated into SMT solvers directly.
When comparing to the state-of-the-art ASP solver \system{clasp}, 
we noticed that the performance of the translation based approach
compared unfavorably, in particular, for benchmarks which contained
recursive rules or weight constraints or both. This indicates that
the performance can be improved by developing new translation 
techniques for these two features. 
In order to obtain a more comprehensive view of the performance
characteristics of the translational approach, the plan is to extend our
experimental setup to include benchmarks that were used in the third ASP
competition~\cite{DBLP:conf/lpnmr/CalimeriIRABCCFFLMMPPRSSTV11}. Moreover,
we intend to use the new SMT library format \cite{SMT-LIB} in future
versions of our translators.

\paragraph{Acknowledgments}
This research has been partially funded by the Academy of Finland
under the project
``\emph{Methods for Constructing and Solving Large Constraint Models}''
(MCM, \#122399).



\end{document}